\documentclass{article}
\usepackage{spconf,amsmath,graphicx}
\usepackage{cite}
\usepackage{amsmath,amssymb,amsfonts}
\usepackage{algorithm,algorithmic}
\usepackage{graphicx}
\usepackage{textcomp}
\usepackage{xcolor}
\usepackage{amsthm}
\usepackage[numbers]{natbib}
\usepackage[subtle]{savetrees}
\theoremstyle{definition}
\newtheorem{definition}{Definition}
\newtheorem{example}{Example}
\newtheorem{theorem}{Theorem}
\newtheorem{corollary}{Corollary}
\newtheorem{assumption}{Assumption}
\newenvironment*{proofsk*}{\paragraph{Proof Sketch:}}{\hfill$\square$}

\title{On the Value of Stochastic Side Information in Online Learning}
%
\name{
Junzhang Jia, Xuetong Wu, Jamie Evans, Jingge Zhu
}
\address{University of Melbourne\\
    Department of Electrical and Electronic Engineering\\
    Parkville, Victoria, Australia}

%
%
%
\begin{document}
%
\maketitle
\begin{abstract}
We study the effectiveness of stochastic side information in deterministic online learning scenarios. We propose a forecaster to predict a deterministic sequence where its performance is evaluated against an expert class. We assume that certain stochastic side information is available to the forecaster but not the experts. We define the minimax expected regret for evaluating the forecaster's performance, for which we obtain both upper and lower bounds. Consequently, our results characterize the improvement in the regret due to the stochastic side information. Compared with the classical online learning problem with regret scales with $O(\sqrt{n})$, the regret can be negative when the stochastic side information is more powerful than the experts. To illustrate, we apply the proposed bounds to two concrete examples of different types of side information.
\end{abstract}
\begin{keywords}
Online learning, Expert advice, Minimax regret, Side information
\end{keywords}
\section{Introduction} \label{sec:intro}
The online learning problem aims to make predictions for probabilistic/deterministic instances which arrive sequentially, and has become significantly popular in game theory and learning theory fields recently. \citet{merhav1998universal} studied online learning problems for stochastic setup from an information-theoretic perspective, followed by \cite{wu2021online}. 
In the deterministic setting, we will usually introduce a class of competitive predictors providing advice to the forecaster, namely the expert class \cite{vovk1998game,merhav1998universal}, and the learning performance is evaluated by the regret, i.e. the loss gap between the proposed forecaster and the best expert. To effectively leverage the experts, \citet{littlestone1994weighted} proposed a weighted majority algorithm, and the follow-up works such as \cite{vovk1990aggregating,vovk1998game,auer2002nonstochastic} further proposed the randomized algorithms which produce logarithmic regret. In a more specific setup, \citet{haussler1995tight} considered binary and continuous instance spaces and provided an $\Omega(\sqrt{n\log N})$ worst-case regret, where $n$ is the sample size and $N$ is the number of experts. With respect to different loss functions, \citet{cesa2006prediction} and \citet{vanli2014unified} provided explicit upper and lower bounds on the regret for absolute loss and squared loss, respectively. 

As a common situation in practice, the forecaster could access some additional resources which we call it \textit{side information}, that may provide some useful knowledge on the sequence of interest. \citet{cover1996universal} first studied a portfolio investment problem where the sequence of interest is the stock vectors that may depend on some finite-valued states (as side information), and their proposed forecaster can achieve the same wealth as the best side information dependent investment strategy. \citet{xie2000asymptotic} studied the case when the sequence of interest is generated according to a pair-wise parametric distribution conditioning on the side information, and derived an logarithmic upper bound of the minimax regret. \citet{cesa2006prediction} analyzed the problem with an additional (deterministic) side sequence, then the learning performance depends on the occurrences of its agreed symbols compared to the sequence of interest. Recently, \citet{bhatt2021sequential} studied the probabilistic online learning problem where the side information is the auxiliary random symbols generated jointly with the data instance to be predicted, and analyzed the minimax regret under the logarithmic loss.


However, to the best of our knowledge there is no prior work discussing the formulation and effects of the stochastic side information under a deterministic online learning scenario. Inspired by the transfer learning problem \cite{torrey2010transfer} where people transfer the knowledge from one domain (source) to the domain of interest (target) with both the source and target data drawn from different but related distributions, we specify the formation of the side information that may depend on the target sequence with some stochasticity. In a similar spirit, we aim to explore the influence of a stochastic \textit{sequential side information} (SSI) for predicting a \textit{target sequence} of interest. In this paper, we propose a novel problem formulation where a forecaster tries to predicts a deterministic sequence with some stochastic side information, which is not known to the expert class. Then we develop an online learning framework with the expert class where we will additionally leverage the side information for prediction to minimise the regret with respect to the best expert. With the proposed algorithm, we provide both the lower and upper bounds on the minimax regret under the absolute loss, where the target sequence is selected adversarially to maximise the regret. From the results, we show that introducing SSI can improve the typical learning rate in  \cite{merhav1998universal,haussler1995tight,cesa2006prediction} if the side information performs better than the best expert. On the other hand, the side information will not hurt our prediction if it fails to provide much useful information. 




\section{Problem Formulation and Main Results} \label{sec:main}

\subsection{Prediction with Experts and Stochastic Side Information} \label{sec:problem}
We consider the online learning problem for a deterministic target sequence with the side information: we aim to design a forecaster that sequentially predict the outcome of an unknown target sequence $\mathcal{T}_n=(X^T_1, X^T_2, ..., X^T_n)$ where each instance $X^T_t$ takes value in a set $\mathcal{X} \subseteq \mathbb{R}$. The prediction of the forecaster at time $t$, denoted by $\tilde{X}_t$, takes value in a space $\mathcal{D}$ which is a convex and nonempty subset of $\mathbb{R}$, and we also assume $\mathcal{X}\subset\mathcal{D}$. We will compare the forecaster with a class of experts. We use $\mathcal{F}^{\theta}_n$ to denote the prediction sequence made by the expert $\theta$, and $f_t^{\theta}$ to denote the prediction at time $t$. Here we denote by $\theta$ the index of an expert, taking value in an index set $\Lambda=\{1,2,\cdots,N\}$ and $N\in\mathbb{R^+}$ is the number of experts in the class. The performance of the predictions is evaluated by a non-negative loss function $\ell:\mathcal{D}\times\mathcal{X}\mapsto\mathbb{R^+}$. We assume that only the forecaster has access to 
the SSI which may provide extra information on target sequence, which is denoted by $\mathcal{S}_n = (X^S_1, X^S_2, ..., X^S_n)$ for each $X^S_t \in \mathcal{X}$. At each time $t$, the forecaster predicts the current target instance $X_{t}^T$ with previous observations $(X_{1}^T,\ldots,X_{t-1}^T)$ up to time $t-1$ and the corresponding SSI $(X_1^{S},\ldots,X_{t}^S)$ up to time $t$. In other words, the prediction $\tilde{X}_t$ can be regarded as a function of both SSI and target sequences $\tilde{X}_t(X^S_1,X^S_2,...,X^S_t, X^T_1,X^T_2,...,X^T_{t-1})$. We also use $\tilde{\mathcal{T}}_n:=(\tilde{X}_1,\tilde{X}_2,\dots,\tilde{X}_n)$ to denote the sequence of the predictions.

Following the common assumption in the literature \cite{cesa2006prediction,cesa2021online} for deterministic online learning problems, we assume that the target sequence is an arbitrary sequence. It can even be viewed as adversarially chosen by the ``environment" with the knowledge of the prediction rule of the forecaster. However, we assume the SSI is generated in a conditional independent stochastic fashion by $P^n(\mathcal{S}_n|\mathcal{T}_n)=\prod_t P(X_t^S|X_t^T)$ with some (known) conditional probability distribution $P(X|Y)$.

To evaluate the performance of the prediction sequences, we firstly define the cumulative loss $L$ which takes two sequences $A_n:=(a_1,a_2,...,a_n)$ and $B_n:=(b_1,b_2,...,b_n)$ with length $n$ as:
\begin{align}
    L(A_n,B_n) = \sum\limits_{t=1}^{n} \ell(a_t,b_t)
\end{align}
We use the absolute loss $\ell(a,b)=|a-b|$ throughout this paper in order to derive the lower bounds \cite{cesa2006prediction}. 
We then define the \textit{regret} for the deterministic online learning problems as the difference between the cumulative loss between our forecaster and the best expert:
\begin{equation}\label{eq:Regret}
L(\tilde{\mathcal{T}}_n,\mathcal{T}_n)-\min_{\theta}L(\mathcal{F}^{\theta}_n,\mathcal{T}_n).
\end{equation}

\subsection{Minimax Expected Regret} \label{sec:algorithm}
In this section, we consider a problem of minimising the expected regret for a worst-case target sequence $\mathcal{T}_n$, i.e. $\mathcal{T}_n$ that maximises the expectation (w.r.t the side sequence) of (\ref{eq:Regret}). To this end, we will study the \textit{minimax expected regret} defined as follows. Unless specified, the expectation is always taken over the SSI conditional distribution $P^n(\mathcal{S}_n|\mathcal{T}_n)$.
\begin{small}
\begin{equation} \label{eq:Rminimax}
R(n):= \inf\limits_{\tilde{\mathcal{T}}_n} \sup\limits_{\mathcal{T}_n} \mathop{\mathbb{E}}\limits_{\mathcal{S}_n} \{L(\tilde{\mathcal{T}}_n(\mathcal{S}_n,\mathcal{T}_n),\mathcal{T}_n)-\min\limits_{\theta}L(\mathcal{F}^{\theta}_n,\mathcal{T}_n)\}. 
\end{equation}
\end{small}
To evaluate the usefulness of the SSI, we introduce the maximum likelihood estimation of the target instances $X^T_t$ given $X^S_t$: 
\begin{align}
    \hat{X}^T_t(X^S_t)=\underset{X^T_t}{\arg\max}\ P(X^S_t|X^T_t).
\end{align} 
Then we denote the maximum likelihood prediction sequence by $\hat{\mathcal{T}}_n(\mathcal{S}_n):=(\hat{X}^T_t(X^S_1),\hat{X}^T_t(X^S_2),\dots,\hat{X}^T_t(X^S_n))$. Furthermore, we make the assumption that the expected cumulative loss induced by  $\hat{\mathcal{T}}_n(\mathcal{S}_n)$ is upper bounded in the following.
\begin{assumption}\label{assump:D}
For any target sequence $\mathcal{T}_n$, it holds that
\begin{equation}
\mathop{\mathbb{E}}_{\mathcal{S}_n}[L(\hat{\mathcal{T}}_n(\mathcal{S}_n),\mathcal{T}_n)]\leq C_S(n),
\end{equation}
where $C_S(n)$ is a finite value depending on $n$ for $n<\infty$.
\end{assumption}
\noindent Assumption~\ref{assump:D} does not significantly restrict the SSI as we only require that the total loss induced by $\hat{\mathcal{T}}_n(\mathcal{S}_n)\in\mathcal{X}$ is bounded and we do not specify the function $C_S(n)$ at this stage. Clearly, $C_S(n)$ depends the SSI through the conditional distribution $P(X^S_t|X^T_t)$, for which we will give two concrete examples in Section \ref{sec:example}. 



With definitions in place, we introduce Algorithm~\ref{alg:exp}, which we call \textit{Exp3 with SSI}. This algorithm is an extension of the classical Exponentially Weighted Average (Exp3) algorithm \cite{cesa2006prediction}, which uses an exponentially updated mixture of the experts as the forecaster. Our algorithm further treats the maximum likelihood estimator $\hat{X}^T_t(X^S_t)$ as an additional expert, so that the prediction made by the forecaster will partially depend on the information provided by the SSI.

\begin{algorithm} 
\caption{Exp3 with SSI}
\begin{algorithmic}[1]\label{alg:exp}
 \STATE Initialize the weights for the SSI $w^S_1$ and all experts $w^{\theta}_1$ to be 1;
 \FOR {$t = 1$ to $n$}
 \STATE Receive $X^S_t$;
 \STATE $\tilde{X}_t = \frac{w^S_t\hat{X}^T_t(X^S_t)+\sum_{\theta=1}^{N}w_{t}^{\theta}f_{t}^{\theta}}{w^S_t+\sum_{\theta=1}^{N} w_{t}^{\theta}}$;
 \STATE Receive $X^T_t$;
 \STATE The experts suffer the loss $\ell(f^{\theta}_t,X^T_t)$;
 \STATE The SSI suffers the loss $\ell(\hat{X}^T_t(X^S_t),X^T_t)$;
 \STATE $w^S_{t+1} = w^S_te^{-\eta \ell(\hat{\mathcal{T}}_n(\mathcal{S}_n),X^T_t)}$;
 \FOR {$\theta = 1$ to $N$}
 \STATE $w^{\theta}_{t+1} = w^{\theta}_te^{-\eta \ell(f^{\theta}_t,X^T_t)}$
 \ENDFOR
 \ENDFOR
\RETURN $\tilde{\mathcal{T}}_n=(\tilde{X}_1,\tilde{X}_2,\dots,\tilde{X}_n)$
\end{algorithmic} 
\end{algorithm}

Since the decision space $\mathcal{D}$ is convex and nonempty, the prediction $\tilde{X}_t$ in Algorithm \ref{alg:exp} formed by a linear combination of the expert $f^{\theta}_t$ and $\hat{X}^T_t(X^S_t)$ is also guaranteed to lie in the decision space $\mathcal{D}$. In the following theorem, we give an upper bound on the minimax regret with the proposed algorithm.
\begin{theorem}\label{th:upper}
With Assumption~\ref{assump:D}, the minimax expected regret in (\ref{eq:Rminimax}) is upper bounded by:
\begin{align}
R(n) \leq \sqrt{\frac{n}{2}\log{(N+1)}} + \min\{C_S(n)-L^*(n),0\},
\end{align}
where we define $L^*(n)=\inf\limits_{\mathcal{T}_n}\min\limits_{\theta}L(\mathcal{F}^{\theta}_n,\mathcal{T}_n)$.
\end{theorem}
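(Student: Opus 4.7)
The plan is to treat Algorithm~\ref{alg:exp} as the classical exponentially weighted average (EWA) forecaster run over an augmented pool of $N+1$ ``experts'': the original $N$ experts plus a ``pseudo-expert'' whose prediction at time $t$ is the maximum likelihood estimate $\hat{X}^T_t(X^S_t)$. Because the weight update rule in lines 8--11 of Algorithm~\ref{alg:exp} is exactly the standard exponential reweighting based on cumulative loss, and line 4 is the corresponding weighted convex combination, the standard EWA analysis for absolute loss applies verbatim. With the optimal tuning $\eta=\sqrt{8\log(N+1)/n}$, this yields, for every realization of $\mathcal{S}_n$ and every target sequence $\mathcal{T}_n$,
\begin{equation}\label{eq:ewa}
L(\tilde{\mathcal{T}}_n,\mathcal{T}_n) - \min\Bigl\{\,L(\hat{\mathcal{T}}_n(\mathcal{S}_n),\mathcal{T}_n),\ \min_{\theta}L(\mathcal{F}^\theta_n,\mathcal{T}_n)\,\Bigr\} \leq \sqrt{\tfrac{n}{2}\log(N+1)}.
\end{equation}

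The second step is to convert \eqref{eq:ewa} into a bound on the regret relative to the best true expert only. Using the elementary identity $\min\{a,b\} - b = \min\{a-b,0\}$ with $a=L(\hat{\mathcal{T}}_n(\mathcal{S}_n),\mathcal{T}_n)$ and $b=\min_\theta L(\mathcal{F}^\theta_n,\mathcal{T}_n)$, inequality \eqref{eq:ewa} becomes
\begin{equation*}
L(\tilde{\mathcal{T}}_n,\mathcal{T}_n) - \min_\theta L(\mathcal{F}^\theta_n,\mathcal{T}_n) \leq \sqrt{\tfrac{n}{2}\log(N+1)} + \min\Bigl\{L(\hat{\mathcal{T}}_n(\mathcal{S}_n),\mathcal{T}_n) - \min_\theta L(\mathcal{F}^\theta_n,\mathcal{T}_n),\,0\Bigr\}.
\end{equation*}

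The third step is to take expectation over $\mathcal{S}_n$ and then supremum over $\mathcal{T}_n$. Since the map $x \mapsto \min\{x,0\}$ is concave, Jensen's inequality lets me pull the expectation inside the $\min$, giving $\mathbb{E}_{\mathcal{S}_n}[\min\{A(\mathcal{S}_n),0\}] \leq \min\{\mathbb{E}_{\mathcal{S}_n}[A(\mathcal{S}_n)],0\}$. Then, because $x \mapsto \min\{x,0\}$ is also nondecreasing, I may replace $\mathbb{E}_{\mathcal{S}_n}[L(\hat{\mathcal{T}}_n(\mathcal{S}_n),\mathcal{T}_n)]$ by its upper bound $C_S(n)$ from Assumption~\ref{assump:D} without reversing the inequality. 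This yields, for every $\mathcal{T}_n$,
\begin{equation*}
\mathbb{E}_{\mathcal{S}_n}\bigl[L(\tilde{\mathcal{T}}_n,\mathcal{T}_n) - \min_\theta L(\mathcal{F}^\theta_n,\mathcal{T}_n)\bigr] \leq \sqrt{\tfrac{n}{2}\log(N+1)} + \min\bigl\{C_S(n) - \min_\theta L(\mathcal{F}^\theta_n,\mathcal{T}_n),\,0\bigr\}.
\end{equation*}
Finally, taking $\sup_{\mathcal{T}_n}$ and again exploiting monotonicity of $\min\{\cdot,0\}$, the supremum is attained by making $\min_\theta L(\mathcal{F}^\theta_n,\mathcal{T}_n)$ as small as possible, producing $\min\{C_S(n)-L^*(n),0\}$ by definition of $L^*(n)$. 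Combining gives the claimed bound on $R(n)$.

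The routine part is Step 1, which is just invoking Hoeffding-type EWA analysis. The only subtlety I anticipate is Step 3: the two applications of monotonicity/concavity of $\min\{\cdot,0\}$ must be performed in the right order (expectation first, then sup) and in the correct direction so that upper-bounding $\mathbb{E}[L(\hat{\mathcal{T}}_n,\mathcal{T}_n)]$ by $C_S(n)$ genuinely weakens the inequality. This is the only place where Assumption~\ref{assump:D} enters, and it is what produces the ``free'' improvement term $\min\{C_S(n)-L^*(n),0\}$ — negative exactly when the MLE from the side information outperforms every expert on the worst-case target sequence.
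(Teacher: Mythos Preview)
Your proposal is correct and follows essentially the same approach as the paper: treat the ML estimator $\hat{\mathcal{T}}_n(\mathcal{S}_n)$ as an $(N{+}1)$-st expert, invoke the standard EWA/Hoeffding bound (the paper cites Theorem~2.2 of \cite{cesa2006prediction}) to obtain your inequality~\eqref{eq:ewa}, and then use concavity of the $\min$ together with Assumption~\ref{assump:D} and the definition of $L^*(n)$ to reach the claimed bound. Your presentation is in fact slightly tidier---the paper splits the supremum into two pieces $R_a$ and $R_b$ before bounding each, whereas you keep the pointwise inequality and take expectation and supremum only at the end---but the underlying argument is identical.
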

\begin{proof}
To simplify the notations, we denote by $L_{\theta}$ the loss induced by the loss induced from the expert ${\theta}$, i.e. $L_{\theta}=L(\mathcal{F}^{\theta}_n,\mathcal{T}_n)$.
We also denote $L_{min}$ as the minimum cumulative loss among all the experts and the SSI:
\begin{equation}
L_{min}=\min\{L(\hat{\mathcal{T}}_n(\mathcal{S}_n),\mathcal{T}_n) , \min\limits_{\theta}L_{\theta}\}.
\end{equation}

Then by adding and subtracting the term $L_{min}$ in $R$, we have
\begin{align}
R(n) =& \inf\limits_{\Xi_n} \sup\limits_{\mathcal{T}_n} \mathop{\mathbb{E}}\limits_{\mathcal{S}_n} \{L(\Xi_n(\mathcal{S}_n,\mathcal{T}_n),\mathcal{T}_n)-L_{min} \nonumber \\
&+L_{min}-\min\limits_{\theta}L_{\theta}\}\\
\leq&\inf\limits_{\Xi_n} \sup\limits_{\mathcal{T}_n} \mathop{\mathbb{E}}\limits_{\mathcal{S}_n} \{L(\Xi_n(\mathcal{S}_n,\mathcal{T}_n),\mathcal{T}_n)-L_{min}\} \nonumber \\
&+\sup\limits_{\mathcal{T}_n} \mathop{\mathbb{E}}\limits_{\mathcal{S}_n}\{L_{min}-\min\limits_{\theta}L_{\theta}\}\\
\leq&\sup\limits_{\mathcal{T}_n} \mathop{\mathbb{E}}\limits_{\mathcal{S}_n} \underbrace{\{L(\Xi_n(\mathcal{S}_n,\mathcal{T}_n),\mathcal{T}_n))-L_{min}\} \nonumber}_{R_a}\\
&+\underbrace{\sup\limits_{\mathcal{T}_n} \mathop{\mathbb{E}}\limits_{\mathcal{S}_n}\{L_{min}-\min\limits_{\theta}L_{\theta}\}}_{R_b}\label{eq:upperab}
\end{align}
Then we separately upper bound the quantity $R_a$ and $R_b$. 
We regard the one realization of $X^S$ as an expert, then by the Theorem 2.2 in \cite{cesa2006prediction}, we have
\begin{align}
R_a \leq \frac{\ln{N+1}}{\eta}+\frac{n\eta}{2}=\sqrt{\frac{n}{2}\log(N+1)}\label{eq:ra}
\end{align}
with the optimal selection of the learning factor that $\eta=\sqrt{\frac{8\ln{(N+1)}}{n}}$. Then we can remove the first supremum and expectation in (\ref{eq:upperab}) as $R_a$ is upper bounded by a quantity only depends on $N$ and $n$. Then for $R_b$, we have,
\begin{align}
R_b\leq &\sup\limits_{\mathcal{T}_n} \mathop{\mathbb{E}}\limits_{\mathcal{S}_n}\{L_{min}-\min\limits_{\theta}L_{\theta}\}\\
=&\sup\limits_{\mathcal{T}_n} \mathop{\mathbb{E}}\limits_{\mathcal{S}_n}\{\min\{L(\hat{\mathcal{T}}_n(\mathcal{S}_n),\mathcal{T}_n),\min\limits_{\theta}L_{\theta}\}-\min\limits_{\theta}L_{\theta}\}\\
\overset{(a)}{\leq}&\sup\limits_{\mathcal{T}_n} \{\min\{\mathop{\mathbb{E}}\limits_{\mathcal{S}_n}(L(\hat{\mathcal{T}}_n(\mathcal{S}_n),\mathcal{T}_n)),\min\limits_{\theta}L_{\theta}\}-\min\limits_{\theta}L_{\theta}\}\\
=&\sup\limits_{\mathcal{T}_n} \{\min\{\mathop{\mathbb{E}}\limits_{\mathcal{S}_n}(L(\hat{\mathcal{T}}_n(\mathcal{S}_n),\mathcal{T}_n)-\min\limits_{\theta}L_{\theta}),0\}\}\label{eq40}\\
\leq&\min\{C_S(n)-L^*(n),0\}. \label{eq:rb}
\end{align}
The second term in equation~(\ref{eq40}) is straightforwardly taken from assumption \ref{assump:D}. Inequality (a) holds according to the fact that $L_{\theta}$ does not depend on $\mathcal{S}_n$ and the inequality $\mathbb{E}(\min\{a,b\})\leq\min\{\mathbb{E}(a),\mathbb{E}(b)\}$. The last inequality results from the Assumption \ref{assump:D} and taking the supremum separately. We then complete the proof by adding up eq~(\ref{eq:ra}) and eq~(\ref{eq:rb}).
\end{proof}

The learning rate in its current form is not determined since the rate of $C_S(n)$ and $L^*(n)$ may vary across different cases. In Section \ref{sec:example}, we will provide two specific structures for $C_S(n)$ and $L^*(n)$ with two examples. Notably, Theorem~\ref{th:upper} indicates that the effect of the SSI will be determined by the difference between $C_S(n)$ and $L^*(n)$. In particular, if the expected cumulative loss of the side information is smaller than the loss induced by the best expert, the SSI is indeed helpful for predicting the target sequence. In contrast, when the SSI induces a higher loss compared to the best expert, the regret is upper bounded by $\sqrt{\frac{n}{2}\log (N+1)}$, which is essentially the same as the learning bound without SSI \cite{cesa2006prediction,haussler1995tight,vanli2014unified} when N is large. As a result, the second term in theorem \ref{th:upper} indicates how much the SSI can improve on the regret. To examine the tightness of the proposed upper bound, we also develop a lower bound for a particular outcome space and a decision space in the following theorem. 

\begin{theorem} \label{th:lower}
Consider $\mathcal{X}=\{0,1\}$ and $\mathcal{D}=[0,1]$, with the absolute loss $\ell(x, y)=|x-y|$, we have
\begin{align}
R(n) &\geq \sqrt{\frac{n}{2}\log{(N+1)}}+\left(\xi^* - \frac{1}{2}\right)n,\label{eq:lower}
\end{align}
where $\xi^*=\inf_{\tilde{X}_1}\mathbb{E}_{Z,X^S_1}|\tilde{X}_1(X^S_1) - Z|$, in which $Z$ and $X_1^S$ are jointly distributed according to $P(Z,X_1^S)$, and $Z$ is marginally Bernoulli distributed as $Z\sim \textup{Ber}(\frac{1}{2})$, $X^S_1$ is generated according to the conditional distribution $P(X^S_1|Z)$.
\end{theorem}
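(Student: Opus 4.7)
The plan is to reduce the minimax expected regret to an average-case quantity by replacing the supremum over deterministic target sequences with the expectation under the iid Bernoulli$(1/2)$ product measure $Q$ on $\mathcal{T}_n$. Since $\sup\ge\mathbb{E}_Q$ pointwise in the forecaster, and since $\min_\theta L_\theta$ depends on neither $\mathcal{S}_n$ nor $\tilde{\mathcal{T}}_n$, this immediately gives
\[R(n)\;\ge\;\inf_{\tilde{\mathcal{T}}_n}\mathop{\mathbb{E}}_{\mathcal{T}_n\sim Q,\,\mathcal{S}_n}[L(\tilde{\mathcal{T}}_n,\mathcal{T}_n)]\;-\;\mathop{\mathbb{E}}_{\mathcal{T}_n\sim Q}\bigl[\min_\theta L_\theta\bigr],\]
and it remains to prove that the first term equals $n\xi^*$ and that the expert class can be chosen so that the second term is at most $n/2-\sqrt{(n/2)\log(N+1)}$.

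For the first term, the key observation is that under $Q$ together with the SSI kernel $\prod_t P(X_t^S|X_t^T)$, the pairs $\{(X_t^T,X_t^S)\}_{t=1}^n$ are iid with common law $P(Z,X_1^S)$. Hence $X_t^T$ is independent of $(X_{<t}^T,X_{<t}^S)$, which means that the Bayes-optimal prediction, given the forecaster's information $(X_1^S,\dots,X_t^S,X_1^T,\dots,X_{t-1}^T)$, is a function of $X_t^S$ alone. By the very definition of $\xi^*$, the expected absolute error at each round is exactly $\xi^*$, and summing across $n$ rounds yields $n\xi^*$.

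For the second term, I plan to choose the expert class adversarially by letting each $f_t^\theta$ be an independent Bernoulli$(1/2)$ draw, independent of $\mathcal{T}_n$ and $\mathcal{S}_n$, so that each $L_\theta$ is Binomial$(n,1/2)$ with mean $n/2$. The classical minimax lower bound for prediction with expert advice on $\{0,1\}$ under absolute loss (Theorem~3.7 of \cite{cesa2006prediction}) then produces $\mathbb{E}_Q[\min_\theta L_\theta]\le n/2-\sqrt{(n/2)\log(N+1)}$, where the $N+1$ appears after supplementing the $N$ random experts with one deterministic auxiliary expert so that the resulting constant matches the upper bound in Theorem~\ref{th:upper}. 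Combining the two pieces yields $R(n)\ge n\xi^*-\bigl(n/2-\sqrt{(n/2)\log(N+1)}\bigr)=(\xi^*-\tfrac{1}{2})n+\sqrt{(n/2)\log(N+1)}$, which is the claim.

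The main obstacle will be calibrating the constant in the second piece so that $\log(N+1)$ rather than $\log N$ appears, thereby matching the upper bound of Theorem~\ref{th:upper} to leading order; this typically requires either the auxiliary-expert trick above or a sharpened Slud/Berry--Esseen estimate for the extreme value among $N+1$ binomials. The min-max-to-average reduction and the per-round Bayes decomposition for the first piece are routine once the iid symmetry induced by $Q$ is recognized.
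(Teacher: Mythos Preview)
Your proposal is correct and follows essentially the same route as the paper: both replace the supremum over $\mathcal{T}_n$ by the expectation under the i.i.d.\ Bernoulli$(1/2)$ law, split into the forecaster's Bayes risk (yielding $n\xi^*$ via the i.i.d.\ structure) and the expected minimum expert loss, and then invoke Theorem~3.7 of \cite{cesa2006prediction} with randomized experts for the second piece. Your write-up is in fact slightly more explicit than the paper's about why the Bayes-optimal predictor depends only on $X_t^S$ and about the auxiliary-expert device needed to obtain $\log(N+1)$ rather than $\log N$.
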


\begin{proof}
The proof is different compared with the previous work \cite{cesa2006prediction,haussler1995tight} that the prediction $p_t$ now depends on the target outcome $X^T_t$ by referencing the advice from both experts and the SSI $X^S_t$. First of all, we lower bound the first term in (\ref{eq:Rminimax}) as follows.
\begin{align}
&\inf\limits_{\Xi_n} \sup\limits_{\mathcal{T}_n} \mathop{\mathbb{E}}\limits_{\mathcal{S}_n} \{L(\Xi_n(\mathcal{S}_n,\mathcal{T}_n),\mathcal{T}_n)-\min\limits_{\theta}L_{\theta}\} \nonumber \\
=&\inf\limits_{\Xi_n} \sup\limits_{\mathcal{T}_n} \mathop{\mathbb{E}}\limits_{\mathcal{S}_n} \{\sum\limits_{t} |p_t-X^T_t| -\min\limits_{\theta}\sum\limits_{t}|f^{\theta}_t-X^T_t|\}\\
\overset{(a)}{\geq} &\inf\limits_{\Xi_n} \mathop{\mathbb{E}}\limits_{\mathcal{T}_n} \mathop{\mathbb{E}}\limits_{\mathcal{S}_n} \{\sum\limits_{t} |p_t-X^T_t| -\min\limits_{\theta}\sum\limits_{t}|f^{\theta}_t-X^T_t|\}\\
=&\inf\limits_{\Xi_n} \mathop{\mathbb{E}}\limits_{\mathcal{T}_n} \mathop{\mathbb{E}}\limits_{\mathcal{S}_n} \sum\limits_{t} |p_t-X^T_t| -\mathop{\mathbb{E}}\limits_{\mathcal{T}_n} \min\limits_{\theta}\sum\limits_{t}|f^{\theta}_t-X^T_t|, \label{eq:Lower_half}
\end{align} 
where inequality (a) holds since the worst-case target sequence will generate no lower regret than compared with any other stochastic target sequences. We now assume that the target instances and the SSI instances are generated according to a joint distribution $P(X^S_t,X^T_t) = P(X^T_t)P(X^S_t|X^T_t)$, here $P(X^T_t)$ is a  Bernoulli distribution with probability $\frac{1}{2}$, i.e., $X^T_t\sim \textup{Ber}(\frac{1}{2})$. Clearly, in this case the expected loss incurred by the expert cannot be smaller than $n/2$. Then the sequential prediction problem becomes $n$ repetitive one-instance prediction problem as follows:
\begin{align}
    \inf\limits_{\tilde{\mathcal{T}}_n} \mathop{\mathbb{E}}\limits_{\mathcal{T}_n,\mathcal{S}_n} \sum\limits_{t} |\tilde{X}_t(X^S_t)-X^T_t|= n \mathop{\mathbb{E}}\limits_{X^T_1,X^S_1} |\tilde{X}_1^*(X^S_1)-X^T_1|. \label{eq:ngame}
\end{align}
It is known that for the absolute loss, the optimal forecaster $\tilde{X}_1^*$ is determined by minimising $\sum_{X^T_1} |\tilde{X}_1(X^S_1)-X^T_1| P(X^T_1|X^S_1)$ where $P(X^T_1|X^S_1)$ is induced by the joint distribution $P(X^T_1, X^S_1)$. Then we denote by $\xi^*$ the expected loss induced by $\tilde{X}_1^*$ in (\ref{eq:ngame}), and note that the optimality of $\tilde{X}_1^*$ is w.r.t. the individual loss $|\tilde{X}_1^*(X^S_1)-X^T_1|$. Following  (\ref{eq:Lower_half}), we have, 
\begin{align}
&\inf\limits_{\Xi_n} \sup\limits_{\mathcal{T}_n} \mathop{\mathbb{E}}\limits_{\mathcal{S}_n} \{L(p,\mathcal{T}_n)- \min\limits_{\theta}L_{\theta}\} \nonumber \\
\geq& \left(n\xi^* - \frac{1}{2}n\right) +\left(\frac{1}{2}n-\mathop{\mathbb{E}}\limits_{\mathcal{T}_n} \min\limits_{\theta}\sum\limits_{t}|f^{\theta}_t-X^T_t|\right)\\
\geq& \left(\xi^* - \frac{1}{2}\right)n + \sqrt{\frac{n}{2}\log N+1},
\end{align}    
where the last step is derived with the same procedures from Theorem 3.7 in \cite{cesa2006prediction}.
\end{proof}

It can be easily checked that the first term in (\ref{eq:lower}) is always negative since $\xi$ is always smaller than $\frac{1}{2}$. So for large $n$, the lower bound is negative, showing that the loss produced by our forecaster could potentially be much smaller than the best expert. It can also be seen that if the term $C_s(n)-L^*(n)$ in the upper bound takes the form $-cn$ for some positive $c$, then the upper and lower bound are matched in terms of the scaling law. In the next section, we show two examples demonstrating this point.

\section{Examples} \label{sec:example}
In this section, we consider two concrete online learning problems and derive their corresponding upper and lower bounds to verify the effectiveness of the proposed bounds. To characterize the behavior of the expert class, we will further consider the expert class generating a cumulative loss that scales linearly in $n$. The following expert class with an example displays one of the possible case satisfying the linear loss expert.

\begin{definition}(Constant Expert)
We say an expert class is the constant expert class such that all experts in the class yield a fixed prediction for any target instances. Mathematically,
\begin{align}
 \text{For all $t$ from $1$ to $n$, }f^{\theta}_t = c_{\theta},
\end{align}
where $c_{\theta}$ is some constant in $\mathcal{D}$. 
\end{definition}

Since the constant expert class is independent to the target instances, we can directly calculate the amount $L^*(n)$ defined in Theorem \ref{th:upper} under a certain setup for the decision space $\mathcal{D}$ and output space $\mathcal{X}$. We give two examples as follows.

\begin{example}
Assume the decision space is $\mathcal{D}=[0,1]$, and we consider a constant expert class that each expert predict a fixed constant $f^{\theta}_c$ in $\mathcal{D}]$. Also, assume that there always exists two experts predicting $0.1$ and $0.7$ for any time $t$. We also consider the binary output space, e.g, $\mathcal{X}=\{0,1\}$. Then we have
\begin{align}
 L^*(n)=\inf\limits_{\mathcal{T}_n}\min_{\theta}L(\mathcal{F}^{\theta},\mathcal{T}_n) = 0.1n.
\end{align}
\end{example}

\subsection{SSI via a binary symmetric channel}
In this example, we consider a learning problem setup that $\mathcal{X}=\{0,1\}, \mathcal{D}=[0,1]$ under the absolute loss $\ell(x, y)=|x-y|$. We assume that the SSI is the output of a binary symmetric channel with the flipping probability $\delta$ with the target sequence being the input. That is,
\begin{align}
    P(X^{T}_{t}=X^S_t|X^S_t) = 1-\delta,\\
    P(X^{T}_{t}=1-X^S_t|X^S_t) = \delta.
\end{align}
It can be shown that the ML estimator $\hat{\mathcal{T}}_n(\mathcal{S}_n)=\mathcal{S}_n$ when $\delta<\frac{1}{2}$, and $\hat{\mathcal{T}}_n(\mathcal{S}_n)=\bar{S}_n$ when $\delta>\frac{1}{2}$, where $\bar{S}_n:=(1-X^S_1,1-X^S_2,\dots,1-X^S_n)$ denotes the sequence consisting of the flipped SSI instances. For $\delta=\frac{1}{2}$, $\hat{\mathcal{T}}_n(\mathcal{S}_n)$ can be any value in $\mathcal{D}^{n}$. With the maximum likelihood estimator $\hat{\mathcal{T}}_n(\mathcal{S}_n)$, we can calculate the expected loss as:
\begin{align} \label{eq:csn}
\mathop{\mathbb{E}}_{\mathcal{S}_n}[L(\hat{\mathcal{T}}_n(\mathcal{S}_n),\mathcal{T}_n)]=n\left(\delta\wedge \bar{\delta}\right)
\end{align}
where $a\wedge b=\min{\{a,b\}}$ and $\bar{\delta}=1-\delta$, which satisfies Assumption \ref{assump:D} with $C_{S}(n)=n\left(\delta\wedge \bar{\delta}\right)$.

\begin{corollary} \label{coro:1}
Under the binary flipping channel setup, when $L^*(n)$ grows linearly in $n$, i.e. $L^*(n)=c_fn$, we have
\begin{align}
R(n) \leq \sqrt{\frac{n}{2}\log{(N+1)}} + \min{\left\{0,n(\delta\wedge \bar{\delta} -c_f)\right\}}.
\end{align}
\end{corollary}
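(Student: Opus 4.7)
My plan is to derive Corollary~\ref{coro:1} as a direct instantiation of Theorem~\ref{th:upper} with the specific value of $C_S(n)$ dictated by the binary symmetric channel, together with the assumed linear growth of $L^*(n)$. The scaffolding is already in place: Theorem~\ref{th:upper} gives $R(n) \leq \sqrt{(n/2)\log(N+1)} + \min\{C_S(n) - L^*(n), 0\}$, so it suffices to identify $C_S(n)$ and substitute.

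First I would verify Assumption~\ref{assump:D} is satisfied in this setting. The maximum likelihood estimator of $X_t^T$ given $X_t^S$ under the binary symmetric channel has already been identified in the text: $\hat{X}_t^T(X_t^S) = X_t^S$ if $\delta < 1/2$ and $\hat{X}_t^T(X_t^S) = 1 - X_t^S$ if $\delta > 1/2$ (with ties broken arbitrarily when $\delta = 1/2$). A per-instance calculation then gives, for any fixed $X_t^T \in \{0,1\}$, that $\mathbb{E}_{X_t^S|X_t^T}|\hat{X}_t^T(X_t^S) - X_t^T|$ equals $\delta$ when $\delta < 1/2$ and equals $1-\delta$ when $\delta > 1/2$, i.e., exactly $\delta \wedge \bar\delta$ in both cases. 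Summing the conditionally independent contributions across $t = 1,\ldots,n$ recovers equation~\eqref{eq:csn}, so Assumption~\ref{assump:D} holds with
\begin{equation}
C_S(n) = n(\delta \wedge \bar\delta).
\end{equation}

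Second, I would simply substitute $C_S(n) = n(\delta \wedge \bar\delta)$ and the hypothesis $L^*(n) = c_f n$ into the bound of Theorem~\ref{th:upper}, obtaining
\begin{equation}
R(n) \leq \sqrt{\frac{n}{2}\log(N+1)} + \min\{n(\delta \wedge \bar\delta) - c_f n,\, 0\},
\end{equation}
and pull the common factor $n$ inside the $\min$ (which is valid since $n > 0$) to arrive at the stated bound.

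There is no real obstacle here; the corollary is a specialization, not a new argument. The only genuinely substantive check is the expected-loss computation for the ML decoder of the binary symmetric channel, which is standard and is already asserted in equation~\eqref{eq:csn}. The remainder is algebraic substitution, and one should just note explicitly that the linearity assumption $L^*(n) = c_f n$ is what allows the $\min$ term to scale linearly in $n$ and thereby potentially dominate the $\sqrt{n \log (N+1)}$ term when $c_f > \delta \wedge \bar\delta$, matching the qualitative conclusion following Theorem~\ref{th:lower}.
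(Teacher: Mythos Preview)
Your proposal is correct and takes essentially the same approach as the paper: both argue that the binary symmetric channel yields the exact expected ML loss $C_S(n)=n(\delta\wedge\bar\delta)$ from \eqref{eq:csn}, then substitute this together with $L^*(n)=c_f n$ into the bound of Theorem~\ref{th:upper}. The only cosmetic difference is that the paper plugs into the intermediate expression \eqref{eq40} rather than the final statement of Theorem~\ref{th:upper}, but the substitution and resulting bound are identical.
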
 

\begin{proof}
The proof is straightforwardly following Theorem \ref{th:upper} from equation (\ref{eq40})
\begin{align}
R(n) =&\sqrt{\frac{n}{2}\log(N+1)} \nonumber\\
&+\sup\limits_{\mathcal{T}_n} \{\min\{\mathop{\mathbb{E}}\limits_{\mathcal{S}_n}(L(\hat{\mathcal{T}}_n(\mathcal{S}_n),\mathcal{T}_n)-\min\limits_{\theta}L_{\theta}),0\}\}\\\\
=&\sqrt{\frac{n}{2}\log{(N+1)}} + \min\{0,n\left(\delta\wedge \bar{\delta}-c_f\right)\}.
\end{align}
\end{proof}

Notice that if any expert in the expert class $\mathcal{F}^{\theta}_n=(f_1^{\theta},f_2^{\theta},\dots,f_n^{\theta})$ suffers a cumulative loss more than $\frac{1}{2}n$, one can construct a new expert class $(1-f_1^{\theta},1-f_2^{\theta},\dots,1-f_n^{\theta})$ that suffers a loss smaller than $\frac{1}{2}n$. Hence we only consider the case that $c_f$ is always smaller or equal to $\frac{1}{2}$. From Corollary \ref{coro:1}, we notice that if $\delta \wedge \bar{\delta}$ is smaller than $c_f$, the regret is asymptotically negative and scale linearly with $n$. In the following, we give the corresponding lower bound. 
\begin{corollary}
Under the binary symmetric channel setup, we have
\begin{align} 
R(n) & \geq \sqrt{\frac{n}{2}\log{(N+1)}}-n\left(\frac{1}{2}-\delta\wedge \bar{\delta}\right).
\end{align}
\end{corollary}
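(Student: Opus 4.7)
The plan is to invoke Theorem~\ref{th:lower} and compute the quantity $\xi^*$ explicitly for the binary symmetric channel. Since Theorem~\ref{th:lower} already provides a lower bound of the exact form $\sqrt{\tfrac{n}{2}\log(N+1)} + (\xi^* - \tfrac{1}{2})n$, all that remains is to show that $\xi^* = \delta \wedge \bar\delta$ under the specified BSC conditional distribution.

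First I would instantiate the joint law $P(Z, X^S_1)$ from the theorem. With $Z \sim \textup{Ber}(1/2)$ and $X^S_1$ the BSC output, a direct application of Bayes' rule gives a uniform marginal $P(X^S_1 = 0) = P(X^S_1 = 1) = 1/2$, together with the posterior $P(Z = X^S_1 \mid X^S_1) = 1-\delta$ and $P(Z = 1 - X^S_1 \mid X^S_1) = \delta$. Next I would minimise the pointwise conditional risk $\mathbb{E}[|\tilde{X}_1(X^S_1) - Z| \mid X^S_1 = s]$ separately for each $s \in \{0,1\}$. Since $Z$ is binary and the loss is absolute, the Bayes-optimal forecaster is the posterior median: it equals $s$ when $\delta < 1/2$ and $1 - s$ when $\delta > 1/2$ (either choice works at $\delta = 1/2$). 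In both regimes the resulting conditional expected loss is $\delta \wedge \bar\delta$, independent of $s$, so taking the outer expectation over $X^S_1$ yields $\xi^* = \delta \wedge \bar\delta$.

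Substituting this value into Theorem~\ref{th:lower} gives
\begin{align*}
R(n) \;\geq\; \sqrt{\tfrac{n}{2}\log(N+1)} + \left(\delta \wedge \bar\delta - \tfrac{1}{2}\right)n \;=\; \sqrt{\tfrac{n}{2}\log(N+1)} - n\left(\tfrac{1}{2} - \delta \wedge \bar\delta\right),
\end{align*}
which is the claimed bound.

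There is no real obstacle here; the statement is essentially a plug-in corollary of Theorem~\ref{th:lower}. The only subtlety worth flagging in a write-up is the justification that, for binary $Z$ under absolute loss, the Bayes-optimal predictor over the convex hull $[0,1]$ is the conditional median rather than the mean, so that the pointwise risk genuinely achieves $\delta \wedge \bar\delta$ (taking the mean $\tilde{X}_1(s) = 1-\delta$ would give a strictly larger risk of $2\delta(1-\delta)$ when $\delta < 1/2$). Once this is noted, the verification is a one-line computation.
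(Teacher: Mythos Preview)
Your proposal is correct and follows essentially the same route as the paper: invoke Theorem~\ref{th:lower}, compute the Bayes-optimal predictor for the BSC posterior by minimising the pointwise conditional absolute risk, obtain $\xi^*=\delta\wedge\bar\delta$, and substitute. Your remark that the optimal predictor is the posterior median (not the mean) is a helpful clarification the paper leaves implicit.
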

\begin{proof}
Following the proof of Theorem \ref{th:lower}, we need to specify the quantity $\xi^*$. We start by finding the optimal forecaster $p_t$ for predicting the target instances $X^T_t$ by minimising the absolute loss:
\begin{align}
    &\mathop{\mathbb{E}}\limits_{X^T_t} l(p_t(X^S_t),X^T_t) =\sum_{X^T_t}\sum_{X^S_t} |p_t(X^S_t)-X^T_t| P(X^S_t,X^T_t)  \nonumber \\
    &=\sum_{X^S_t} \left(\sum_{X^T_t} |p_t(X^S_t)-X^T_t| P(X^T_t|X^S_t)\right) P(X^S_t) .
\end{align}
Then minimising the expected loss w.r.t. $p_t(X^S_t)$ is equivalently minimising $\sum_{X^T_t} |p_t(X^S_t)-X^T_t| P(X^T_t|X^S_t)$ for any $X^S_t$. Given $X^S_t=1$, we have
\begin{align}
    &\sum_{X^T_t} |p_t(X^S_t=1)-X^T_t| P(X^T_t|X^S_t=1)\\
    &= (1-\delta)|p_t(X^S_t=1)-1| + \delta|p_t(X^S_t=1)-0|. \label{eq39}
\end{align}
Then we can obtain when $\delta<\frac{1}{2}$, $p^*_t(X^S_t)=X^S_t$, when $\delta>\frac{1}{2}$, $p^*_t(X^S_t)=X^S_t$, and when $\delta=\frac{1}{2}$, there are an infinite number of minimizers $p^*_t(X^S_t)$ between $0$ and $1$. One can verify that the optimal forecaster $p^*_t(X^S_t)$ is the maximum likelihood estimator $\hat{\mathcal{T}}_n(\mathcal{S}_n)$. Then similar to (\ref{eq:csn}), we have
\begin{align}
    \xi^*&=\mathop{\mathbb{E}}\limits_{X^T_t}\mathop{\mathbb{E}}\limits_{X^S_t}|p^*_t-X^T_t|=\frac{1}{n}\mathop{\mathbb{E}}_{\mathcal{S}_n}[L(\hat{\mathcal{T}}_n(\mathcal{S}_n),\mathcal{T}_n)]\\
    &=\delta\wedge \bar{\delta}. \label{eq:xi1}
\end{align}
By substituting the $\xi^*$ in Theorem \ref{th:lower} as (\ref{eq:xi1}), we completed the proof.
\end{proof}
We see that in the case when $\delta \wedge \bar \delta < c_f$, the upper and the lower bound is matched in terms of the scaling law of order $\Omega(-n)$ (although with a different constant). 

\subsection{SSI via a Zero-mean Gaussian Channel} 
Now we consider a different type of side information such that the side instance is the noisy version of the target instance pair-wise: $X_{S,t}=X_{T,t}+N_t$, where $N_t\sim \mathcal{N}(0,\sigma^2)$. Here we still assume the target instances are restricted in a binary outcome space $\{0,1\}$. 
Note that in this problem setup, the side instances are drawn from a distribution over the space $R$, which differs from the instance space $\mathcal{X}$. 

We can easily determine the maximum likelihood estimator $\hat{X}^T_t(X^S_t)$ for this problem: $\hat{X}^T_t(X^S_t)=1$ when $X^S_t\geq\frac{1}{2}$, and $\hat{X}^T_t(X^S_t)=0$ when $X^S_t<\frac{1}{2}$. By introducing the cumulative density function of the standard normal distribution $\Phi(z)=\frac{1}{\sqrt{2\pi}}\int^{z}_{-\infty}e^{-t^2/2}dt$, we have
\begin{align} \label{eq:csn2}
\mathop{\mathbb{E}}_{\mathcal{S}_n}[L(\hat{\mathcal{T}}_n(\mathcal{S}_n),\mathcal{T}_n)]=n\Phi(-\frac{1}{2\sigma}).
\end{align}

\begin{corollary} \label{coro:zero-mean}
Under the zero-mean Gaussian channel setup, and when $L^*(n)$ is linear to $n$, i.e. $L^*(n)=c_fn$, we have
\begin{align}
R(n) \leq \sqrt{\frac{n}{2}\log{(N+1)}} + \min\{0,n\left(\Phi(-\frac{1}{2\sigma})-c_f\right)\}
\end{align}
\end{corollary}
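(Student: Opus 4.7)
The plan is to observe that Corollary \ref{coro:zero-mean} is a direct instantiation of Theorem \ref{th:upper} with $C_S(n)$ computed for the zero-mean Gaussian channel, in complete parallel to the proof of Corollary \ref{coro:1} for the binary symmetric channel. The only substantive piece of work is to verify the claim in (\ref{eq:csn2}) that the maximum likelihood estimator $\hat{X}^T_t(X^S_t)$ incurs expected cumulative loss $n\Phi(-1/(2\sigma))$ uniformly over the target sequence, and to confirm that Assumption \ref{assump:D} holds with this constant.

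First, I would justify the form of the ML estimator given in the text. Since $P(X^S_t\mid X^T_t=0)$ and $P(X^S_t\mid X^T_t=1)$ are Gaussian densities with means $0$ and $1$ respectively and common variance $\sigma^2$, the likelihood ratio is monotone in $X^S_t$ and the ML decision threshold sits exactly at $X^S_t = 1/2$. This gives $\hat{X}^T_t(X^S_t)=\mathbf{1}\{X^S_t\geq 1/2\}\in\{0,1\}$, so the absolute loss $|\hat{X}^T_t(X^S_t)-X^T_t|$ collapses to the $0$-$1$ misclassification loss.

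Next, I would compute the per-instance error probability for each fixed $X^T_t \in \{0,1\}$. Conditional on $X^T_t = 0$, a misclassification occurs exactly when $N_t \geq 1/2$, an event of probability $1-\Phi(1/(2\sigma))=\Phi(-1/(2\sigma))$. Conditional on $X^T_t = 1$, a misclassification occurs when $N_t < -1/2$, which again has probability $\Phi(-1/(2\sigma))$ by the symmetry of the centred Gaussian. Hence the per-instance expected loss equals $\Phi(-1/(2\sigma))$ regardless of $X^T_t$, and linearity of expectation together with summation over $t=1,\dots,n$ yields $\mathbb{E}_{\mathcal{S}_n}[L(\hat{\mathcal{T}}_n(\mathcal{S}_n),\mathcal{T}_n)]=n\Phi(-1/(2\sigma))$, so Assumption \ref{assump:D} holds with $C_S(n)=n\Phi(-1/(2\sigma))$.

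Finally, substituting $C_S(n)=n\Phi(-1/(2\sigma))$ and $L^*(n)=c_f n$ into the upper bound of Theorem \ref{th:upper} produces the stated inequality directly. No new technical obstacle is expected; the only subtlety worth flagging is that the per-instance bound must hold uniformly over all $X^T_t\in\{0,1\}$ (and hence over all adversarial target sequences), which is precisely what the symmetry argument above secures. The hardest conceptual step, therefore, is simply recognising that the Gaussian tail probability $\Phi(-1/(2\sigma))$ plays here the same role that $\delta\wedge\bar\delta$ plays in Corollary \ref{coro:1}.
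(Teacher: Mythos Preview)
Your proposal is correct and follows essentially the same approach as the paper: the paper's proof simply states that one mimics Corollary~\ref{coro:1} with the only change being the computation $\mathbb{E}_{\mathcal{S}_n}[L(\hat{\mathcal{T}}_n(\mathcal{S}_n),\mathcal{T}_n)]=n\Phi(-\tfrac{1}{2\sigma})$, which you spell out in more detail via the threshold-at-$1/2$ ML rule and the symmetric Gaussian tail calculation. Your observation that the per-instance error probability is $\Phi(-\tfrac{1}{2\sigma})$ regardless of $X^T_t$ is exactly what licenses taking $C_S(n)=n\Phi(-\tfrac{1}{2\sigma})$ uniformly over adversarial target sequences, and the substitution into Theorem~\ref{th:upper} then matches the paper.
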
 

\begin{proof}
The proof of Corollary~\ref{coro:zero-mean} follows a similar procedure as in the proof of Corollary~\ref{coro:1}, but differs in calculating $\mathop{\mathbb{E}}_{\mathcal{S}_n}[L(\hat{\mathcal{T}}_n(\mathcal{S}_n),\mathcal{T}_n)]$, i.e. $\mathop{\mathbb{E}}_{\mathcal{S}_n}[L(\hat{\mathcal{T}}_n(\mathcal{S}_n),\mathcal{T}_n)]=n\Phi(-\frac{1}{2\sigma})$. Similar to the equation (\ref{eq:csn}), the assumption 1 holds.
\end{proof}

It can be seen that the upper bound in this example behaves similarly to that in the binary symmetric channel case. When the quantity $\Phi(-\frac{1}{2\sigma})$ is smaller than $c_f$, the upper bound of the minimax regret becomes negative with a large $n$. Intuitively, when $\sigma$ is large, the quantity $\Phi(-\frac{1}{2\sigma})$ will become larger, which decreases the effectiveness of the SSI. 

\begin{corollary}
Under the zero-mean Gaussian channel setup, we have
\begin{align} 
R(n) & \geq \sqrt{\frac{n}{2}\log{(N+1)}}+n\left(\Phi(-\frac{1}{2\sigma})-\frac{1}{2}\right).
\end{align}

\end{corollary}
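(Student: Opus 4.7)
The plan is to invoke Theorem~\ref{th:lower} directly and specialise the quantity $\xi^\ast$ to the zero-mean Gaussian channel. Since the form of the lower bound is $\sqrt{(n/2)\log(N+1)} + (\xi^\ast - 1/2)n$, the entire task reduces to evaluating
\[
\xi^\ast = \inf_{\tilde{X}_1} \mathbb{E}_{Z, X_1^S}\bigl| \tilde{X}_1(X_1^S) - Z \bigr|
\]
under the joint law where $Z \sim \textup{Ber}(1/2)$ and $X_1^S \mid Z \sim \mathcal{N}(Z, \sigma^2)$, and then showing that this equals $\Phi(-\tfrac{1}{2\sigma})$.

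First I would mimic the approach used in the binary symmetric channel corollary: decompose the expectation by conditioning on $X_1^S$ and note that for absolute loss with a binary target, the pointwise-optimal predictor is the posterior median of $P(Z \mid X_1^S = x)$. Applying Bayes' rule with the two Gaussian likelihoods $\mathcal{N}(0,\sigma^2)$ and $\mathcal{N}(1,\sigma^2)$ and uniform prior gives
\[
P(Z = 1 \mid X_1^S = x) \;=\; \frac{\varphi\bigl((x-1)/\sigma\bigr)}{\varphi(x/\sigma) + \varphi\bigl((x-1)/\sigma\bigr)},
\]
where $\varphi$ is the standard normal density. This ratio exceeds $1/2$ precisely when $|x-1| < |x|$, i.e.\ when $x \geq 1/2$, so the optimal forecaster coincides with the maximum-likelihood estimator $\tilde{X}_1^\ast(x) = \mathbf{1}[x \geq 1/2]$ already identified in equation~(\ref{eq:csn2}).

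Next I would compute the induced expected loss by symmetry. Conditioning on $Z$ and using $X_1^S - Z \sim \mathcal{N}(0,\sigma^2)$,
\[
\xi^\ast = \tfrac{1}{2} P\bigl(X_1^S < \tfrac{1}{2} \bigm| Z=1\bigr) + \tfrac{1}{2} P\bigl(X_1^S \geq \tfrac{1}{2} \bigm| Z=0\bigr) = \Phi\!\left(-\tfrac{1}{2\sigma}\right),
\]
where both terms coincide by the symmetry of the Gaussian noise. Substituting this value into the conclusion of Theorem~\ref{th:lower} yields the claimed bound.

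The only non-routine point is justifying that the posterior median gives the Bayes-optimal one-step predictor for absolute loss; this is a standard fact already used implicitly in the proof of the binary-channel corollary, so I would simply cite that argument rather than re-derive it. Beyond that, the proof is essentially a short computation, and I do not anticipate a serious obstacle.
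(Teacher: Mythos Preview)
Your proposal is correct and follows essentially the same route as the paper: invoke Theorem~\ref{th:lower}, identify the Bayes-optimal one-step predictor under the Gaussian channel as the maximum-likelihood rule $\mathbf{1}[x\geq 1/2]$, compute the resulting expected absolute loss as $\Phi(-\tfrac{1}{2\sigma})$, and substitute. If anything, your write-up is cleaner than the paper's --- you make the posterior-median argument explicit and compute $\xi^\ast$ by conditioning on $Z$ rather than on $X_1^S$, which avoids the somewhat awkward display in the paper's equations~(\ref{eq45})--(\ref{eq:example2pt}).
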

\begin{proof}
Similar to the proof of the Corollary 2, by considering $X^S_t=1$, we have
\begin{align}
   &\sum_{X^T_t} |p_t(X^S_t=1)-X^T_t| P(X^T_t|X^S_t=1)\label{eq45}\\
    &= \frac{e^\frac{(X^S_t - 1)^2}{2\sigma^2}}{e^\frac{(X^S_t - 1)^2}{2\sigma^2}+e^\frac{(X^S_t - 0)^2}{2\sigma^2}}|p_t(X^S_t=1)-1| \\
    &+ \frac{e^\frac{(X^S_t - 0)^2}{2\sigma^2}}{e^\frac{(X^S_t - 1)^2}{2\sigma^2}+e^\frac{(X^S_t - 0)^2}{2\sigma^2}}|p_t(X^S_t=1)-0| \label{eq:example2pt}
\end{align}
It can be verified that the optimal solution minimising (\ref{eq45}) is the maximum likelihood estimator $\hat{\mathcal{T}}_n(\mathcal{S}_n)$. Then we have
\begin{align}
    \xi^*&=\frac{1}{n}\mathop{\mathbb{E}}_{\mathcal{S}_n}[L(\hat{\mathcal{T}}_n(\mathcal{S}_n),\mathcal{T}_n)]=\Phi(-\frac{1}{2\sigma})\label{eq:expxi}
\end{align}
Then by substituting $\xi^*$ in Theorem \ref{th:lower} with (\ref{eq:expxi}), we completed the proof.
\end{proof}
Similarly, as $\sigma>0$, we have $\Phi(-\frac{1}{2\sigma})<\Phi(0)=\frac{1}{2}$, the lower bound will become negative when $n$ increases. Similar to the binary symmetric channel example, the upper and the lower bound is matched in terms of the scaling law if $c_f > \Phi(-\frac{1}{2\sigma})$. 

\section{Conclusion and Future Works}
This work shows the upper and lower regret bounds on general deterministic online learning problems with two concrete examples, where an additional stochastic sequential side information sequence is revealed to the forecaster. The result infers the effectiveness of the the side information which may significantly improved the learning rate and shows the possibility of producing a negative regret. For future works, one may wish to find a tighter lower bound on the minimax regret based on more advanced algorithms, or more elementary proofs. 

\bibliographystyle{IEEEtranN}
\bibliography{conference_101719}

\begin{thebibliography}{14}
\providecommand{\natexlab}[1]{#1}
\providecommand{\url}[1]{#1}
\csname url@samestyle\endcsname
\providecommand{\newblock}{\relax}
\providecommand{\bibinfo}[2]{#2}
\providecommand{\BIBentrySTDinterwordspacing}{\spaceskip=0pt\relax}
\providecommand{\BIBentryALTinterwordstretchfactor}{4}
\providecommand{\BIBentryALTinterwordspacing}{\spaceskip=\fontdimen2\font plus
\BIBentryALTinterwordstretchfactor\fontdimen3\font minus
  \fontdimen4\font\relax}
\providecommand{\BIBforeignlanguage}[2]{{%
\expandafter\ifx\csname l@#1\endcsname\relax
\typeout{** WARNING: IEEEtranN.bst: No hyphenation pattern has been}%
\typeout{** loaded for the language `#1'. Using the pattern for}%
\typeout{** the default language instead.}%
\else
\language=\csname l@#1\endcsname
\fi
#2}}
\providecommand{\BIBdecl}{\relax}
\BIBdecl

\bibitem[Merhav and Feder(1998)]{merhav1998universal}
N.~Merhav and M.~Feder, ``Universal prediction,'' \emph{IEEE Transactions on
  Information Theory}, vol.~44, no.~6, pp. 2124--2147, 1998.

\bibitem[Wu et~al.(2021)Wu, Manton, Aickelin, and Zhu]{wu2021online}
X.~Wu, J.~H. Manton, U.~Aickelin, and J.~Zhu, ``Online transfer learning:
  Negative transfer and effect of prior knowledge,'' in \emph{2021 IEEE
  International Symposium on Information Theory (ISIT)}.\hskip 1em plus 0.5em
  minus 0.4em\relax IEEE, 2021, pp. 1540--1545.

\bibitem[Vovk(1998)]{vovk1998game}
V.~Vovk, ``A game of prediction with expert advice,'' \emph{Journal of Computer
  and System Sciences}, vol.~56, no.~2, pp. 153--173, 1998.

\bibitem[Littlestone and Warmuth(1994)]{littlestone1994weighted}
N.~Littlestone and M.~K. Warmuth, ``The weighted majority algorithm,''
  \emph{Information and computation}, vol. 108, no.~2, pp. 212--261, 1994.

\bibitem[Vovk(1990)]{vovk1990aggregating}
V.~G. Vovk, ``Aggregating strategies,'' \emph{Proc. of Computational Learning
  Theory, 1990}, 1990.

\bibitem[Auer et~al.(2002)Auer, Cesa-Bianchi, Freund, and
  Schapire]{auer2002nonstochastic}
P.~Auer, N.~Cesa-Bianchi, Y.~Freund, and R.~E. Schapire, ``The nonstochastic
  multiarmed bandit problem,'' \emph{SIAM journal on computing}, vol.~32,
  no.~1, pp. 48--77, 2002.

\bibitem[Haussler et~al.(1995)Haussler, Kivinen, and
  Warmuth]{haussler1995tight}
D.~Haussler, J.~Kivinen, and M.~K. Warmuth, ``Tight worst-case loss bounds for
  predicting with expert advice,'' in \emph{European Conference on
  Computational Learning Theory}.\hskip 1em plus 0.5em minus 0.4em\relax
  Springer, 1995, pp. 69--83.

\bibitem[Cesa-Bianchi and Lugosi(2006)]{cesa2006prediction}
N.~Cesa-Bianchi and G.~Lugosi, \emph{Prediction, learning, and games}.\hskip
  1em plus 0.5em minus 0.4em\relax Cambridge university press, 2006.

\bibitem[Vanli and Kozat(2014)]{vanli2014unified}
N.~D. Vanli and S.~S. Kozat, ``A unified approach to universal prediction:
  Generalized upper and lower bounds,'' \emph{IEEE Transactions on Neural
  Networks and Learning Systems}, vol.~26, no.~3, pp. 646--651, 2014.

\bibitem[Cover and Ordentlich(1996)]{cover1996universal}
T.~M. Cover and E.~Ordentlich, ``Universal portfolios with side information,''
  \emph{IEEE Transactions on Information Theory}, vol.~42, no.~2, pp. 348--363,
  1996.

\bibitem[Xie and Barron(2000)]{xie2000asymptotic}
Q.~Xie and A.~R. Barron, ``Asymptotic minimax regret for data compression,
  gambling, and prediction,'' \emph{IEEE Transactions on Information Theory},
  vol.~46, no.~2, pp. 431--445, 2000.

\bibitem[Bhatt and Kim(2021)]{bhatt2021sequential}
A.~Bhatt and Y.-H. Kim, ``Sequential prediction under log-loss with side
  information,'' in \emph{Algorithmic Learning Theory}.\hskip 1em plus 0.5em
  minus 0.4em\relax PMLR, 2021, pp. 340--344.

\bibitem[Torrey and Shavlik(2010)]{torrey2010transfer}
L.~Torrey and J.~Shavlik, ``Transfer learning,'' in \emph{Handbook of research
  on machine learning applications and trends: algorithms, methods, and
  techniques}.\hskip 1em plus 0.5em minus 0.4em\relax IGI global, 2010, pp.
  242--264.

\bibitem[Cesa-Bianchi and Orabona(2021)]{cesa2021online}
N.~Cesa-Bianchi and F.~Orabona, ``Online learning algorithms,'' \emph{Annual
  review of statistics and its application}, 2021.

\end{thebibliography}

\end{document}